\newtheorem{theorem}{Theorem}[section]
\newcommand{\RR}{{\mathbb R}}
\newcommand{\cD}{{\mathcal D}}
\title{\bf Multi-Robot Transfer Learning: A Dynamical System Perspective 
\thanks{The authors are with the Dynamic Systems Lab (www.dynsyslab.org), Institute
for Aerospace Studies, University of Toronto, Canada. 
M. K. Helwa is also with the Electrical Power and Machines Department, Cairo University, Giza, Egypt.
E-mail:
mohamed.helwa@robotics.utias.utoronto.ca, schoellig@utias.utoronto.ca.
This research was supported by NSERC grant RGPIN-2014-04634 and OCE/SOSCIP TalentEdge Project \#27901.}} 
\author{Mohamed~K.~Helwa and Angela P. Schoellig}
\newenvironment{copyrightnoticeFont}{\fontsize{7pt}{8pt}\selectfont\fontfamily{phv}\selectfont}{\par}
\begin{document}
\maketitle
\thispagestyle{fancyplain}
\renewcommand{\headrulewidth}{0pt}
\pagenumbering{gobble}
\lfoot{\begin{copyrightnoticeFont}\vspace{-2em}
\textbf{Accepted final version}. To appear in \textit{Proc. of the 2017 IEEE/RSJ International Conference on Intelligent Robots and Systems}.\\
\copyright2017 IEEE. Personal use of this material is permitted. Permission from IEEE must be obtained for all other uses, in any current or future media, including reprinting/republishing this material for advertising or promotional purposes, creating new collective works, for resale or redistribution to servers or lists, or reuse of any copyrighted component of this work in other works.\end{copyrightnoticeFont}}
\pagestyle{empty}

\begin{abstract}
Multi-robot transfer learning allows a robot to use data generated by a second, similar robot to improve its own behavior. The potential advantages are reducing the time of training and the unavoidable risks that exist during the training phase. Transfer learning algorithms aim to find an optimal transfer map between different robots. In this paper, we investigate, through a theoretical study of single-input single-output (SISO) systems, the properties of such optimal transfer maps. We first show that the optimal transfer learning map is, in general, a dynamic system. The main contribution of the paper is to provide an algorithm for determining the properties of this optimal dynamic map including its order and regressors (i.e., the variables it depends on). The proposed algorithm does not require detailed knowledge of the robots' dynamics, but relies on basic system properties easily obtainable through simple experimental tests. We validate the proposed algorithm experimentally through an example of transfer learning between two different quadrotor platforms. Experimental results show that an optimal dynamic map, with correct properties obtained from our proposed algorithm, achieves 60-70\% reduction of transfer learning error compared to the cases when the data is directly transferred or transferred using an optimal static map.

\end{abstract}

\section{Introduction}
\label{sec:introd}
Machine learning approaches have been successfully applied to a wide range of robotic applications. This includes the use of regression models, e.g., Gaussian processes and deep neural networks, to approximate kinematic/dynamic models \cite{Nguyen11}, inverse dynamic models \cite{DNN16}, and unknown disturbance models \cite{Ostafew16,Felix15} of robots.
It also includes the use of reinforcement learning (RL) methods to automate a variety of human-like tasks such as screwing bottle caps onto bottles and arranging lego blocks~\cite{abbeel}. Nevertheless, machine learning methods typically require collecting   
a considerable amount of data from real-world operation or simulations of the robots, or a combination of both \cite{ICRA17}.  

Transfer learning (TL) reduces the burden of a robot to collect real-world data by enabling it to use the data generated by a second, similar robot \cite{abbeel2,Kaizad}. This is typically carried out in two phases \cite{Kaizad}. In the first phase, both robots generate data, and an optimal mapping between the generated data sets is learned. In the second phase, the learned map is used to transfer subsequent learning data collected by the second robotic system, called the source system, to the first robotic system, called the target system (see Figure \ref{fig:intro}). The goal of transfer learning is to  reduce the time needed for teaching robots new skills and to reduce the unavoidable risks that usually exist in the training phase, particularly for the cases where the target robotic platform is more expensive or more hazardous to operate than the source robotic platform.

\begin{figure}[t]
\begin{center}
\includegraphics[scale=.27, trim = 10mm 35mm 10mm 25mm]{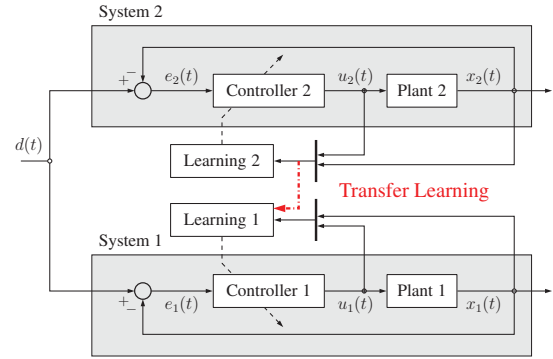} 
\end{center}
\caption{Multi-robot transfer learning framework; transfer learning allows System 1 (target system) to use data from System 2 (source system). In this paper, we provide an algorithm for determining the properties of the optimal transfer learning map between robotic systems. Figure adopted from \cite{Kaizad}.}
\label{fig:intro}
\end{figure}

The use of transfer learning in robotics can be classified into \emph{(i)} multi-task transfer learning, in which the data gathered by a robot when learning a particular task is utilized to speed up the learning of the same robot in other similar tasks \cite{Janssens2012,Hamer2013,Um2014,abbeel,abbeel2,DNN16}, and \emph{(ii)} multi-robot transfer learning, where the data gathered by a robot is used by other similar robots \cite{Schoellig12,Balaraman2010, Boutsioukis2012,Bocsi13,Manifold2,Kaizad,abbeel2,abbeel3}. The latter is the main focus of this paper. Multi-robot transfer learning has received less attention in the literature, cf. \cite{Tuyls12}. In \cite{Schoellig12}, task-dependent disturbance estimates are shared among similar robots to speed up learning in an iterative learning control~(ILC) framework, while in \cite{Balaraman2010,Boutsioukis2012}, polices and rules are transferred between simple, finite-state systems in an RL framework to accelerate robot learning. 
For a similar configuration, skills learned by two different agents in \cite{abbeel3} are used to train invariant feature spaces instead of transferring policies.
One typical approach for transfer learning, used in many applications including robotics, is manifold alignment, which aims to find an optimal, static transformation that aligns datasets \cite{Wang08,Wang09,Bocsi13,Manifold2}. In \cite{Bocsi13,Manifold2}, manifold alignment is used to transfer input-output data of a robotic arm to another arm to improve the learning of a model of the second arm. 

As partially stated in \cite{Kaizad}, although multi-robot transfer learning has been successfully applied in some robotic examples, there is still an urgent need for a general, theoretical study of when multi-robot transfer learning is beneficial, how the dynamics of the considered robots affect the quality of transfer learning, what form the optimal transfer map takes, and how to efficiently identify the transfer map from a few experiments. To fill this gap, the authors of \cite{Kaizad} recently initiated a study along these lines for two first-order, linear time-invariant (LTI), single-input single-output (SISO) systems. In particular, in \cite{Kaizad}, a simple, constant scalar is applied to align the output of the source system with the output of the target system, and then an upper bound on the Euclidean norm of the transformation error is derived and minimized with respect to (w.r.t.) the transformation parameter. The paper \cite{Kaizad} also utilizes the derived, minimized upper bound to analyze the effect of the dynamics of the source and target systems on the quality of transfer learning. 

In this paper, we study how the dynamical properties of the two robotic systems affect the choice of the optimal transfer map. This paper generalizes \cite{Kaizad}, as we consider higher-order, possibly nonlinear dynamical systems and remove the restriction that the transformation map is a static gain. 

The contributions of this paper may be summarized as follows. First, while many transfer learning methods in the literature depend on finding an optimal, static map between multi-robot data sets \cite{Bocsi13,Manifold2,Kaizad}, we show through our theoretical study that the optimal transfer map is, in general, a dynamic system. Recall that in the time domain, static maps are represented by algebraic equations, while dynamic maps are represented by differential or difference equations. Second, we utilize our theoretical study to provide insights into the correct features or properties of this optimal, dynamic map, including its order and regressors (i.e., the variables it depends on). Third, based on these insights, we provide an algorithm for selecting the correct features of this transformation map from basic properties of the source and target systems that can be obtained from few, easy-to-execute experimental tests.
Knowing these features greatly facilitates learning the map efficiently and from little data.
Fourth, we verify the soundness of the proposed algorithm 
experimentally for transfer learning between two different quadrotor platforms. Experimental results show that an optimal, dynamic map, with correct features obtained from our proposed algorithm, achieves $60$-$70\%$ reduction of transfer learning error, compared to the cases when the data is directly transferred or transferred through a static map.

This paper is organized as follows. Section \ref{sec:background} provides preliminary, dynamic-systems definitions. 
In Section \ref{sec:problem}, we define the transfer learning problem studied in this paper. In Section \ref{sec:basic}, we provide theoretical results on transformation maps that achieve perfect transfer learning, and then utilize these results to provide insights into the correct features of optimal transfer maps. In Section \ref{sec:alg}, we present our proposed, practical algorithm. Section \ref{sec:examples} includes a robotic application, and Section \ref{sec:con} concludes the paper.   
   

\section{Background}
\label{sec:background}
In this section, we review basic definitions from control systems theory needed in later sections, see \cite{chnon,Isidori}. We first introduce these definitions for linear systems, and then generalize them to an important class of nonlinear systems, namely control affine systems. To that end, consider first the LTI, SISO, $n$-dimensional state space model,
\begin{equation}
\label{eq:lin_model}
\begin{split}
\dot{x}(t)&=Ax(t) + Bu(t)\\
y(t)&=Cx(t),
\end{split}
\end{equation} 
where $x(t)\in \RR^n$ is the system state vector, $u(t)\in \RR$ is its input, and $y(t)\in \RR$ is its output. It is well known that the input-output representation of \eqref{eq:lin_model} is the transfer function
\begin{equation}
\label{eq:lin_tf}
G(s)=\frac{Y(s)}{U(s)}=C(sI-A)^{-1}B=:\frac{N(s)}{D(s)},
\end{equation} 
where N(s) and D(s) are polynomials in $s$, and we assume without loss of generality (w.l.o.g.) that they do not have common factors. 
Evidently, the system \eqref{eq:lin_tf} is bounded-input-bounded-output (BIBO) stable if and only if all the roots of $D(s)$ are in the open left half plane (OLHP). 
The \emph{relative degree} of the system \eqref{eq:lin_tf} is $deg(D(s))-deg(N(s))$, that is the order of the denominator polynomial $D(s)$ minus the order of the numerator polynomial $N(s)$. 
The definition of relative degree remains the same for discrete-time linear systems $\frac{Y(z)}{U(z)}=\frac{N(z)}{D(z)}$, where $z$ is the forward shift operator.
For \eqref{eq:lin_model}, it can be shown using the series expansion of the transfer function \eqref{eq:lin_tf} that the relative degree is the smallest integer $r$ for which $CA^{r-1}B\neq 0$, and consequently, the relative degree is also the lowest-order derivative of the output that explicitly depends on the input recalling $y^{(r)}(t)=CA^{r}x(t)+CA^{r-1}Bu(t)$, where $y^{(r)}(t)$ represents the $r$-th derivative of $y(t)$ w.r.t. $t$. 

The relative degree $r$ can be calculated from the step response of the system. For continuous-time systems, it is the lowest-order derivative of the step response $y$ that changes suddenly when the input $u$ is suddenly changed. For discrete-time systems, it is the number of sample delays between changing the input and seeing the change in the output. 

We now extend the relative degree definition to nonlinear systems. Let $C^\infty$ denote the class of smooth functions whose partial derivatives of any order exist and are continuous. 
The \emph{Lie derivative} of a smooth function $\lambda(x)$ w.r.t. a smooth vector field $f(x)$, denoted $L_f\lambda$, is the derivative of $\lambda$ in the direction of $f$; that is, $L_f\lambda:=\frac{\partial \lambda}{\partial x}f(x)$. The notation $L_{f}^2\lambda$ is used for the repeated Lie derivative; that is, $L_{f}^2\lambda=L_f(L_f\lambda(x))=\frac{\partial L_f\lambda(x)}{\partial x}f(x)$. Similarly, one can derive an expression for $L_{f}^k\lambda$, where $k>1$. 
Now consider the SISO control affine system,
\begin{equation}
\label{eq:nonlin_model}
\begin{split}
\dot{x}(t)&= f(x(t)) + g(x(t))u(t)\\
y(t)&=h(x(t)),
\end{split}
\end{equation} 
where $x(t)\in D \subset \RR^n$, $u(t)\in \RR$, $y(t)\in \RR$, and $f,~g,~h$ are $C^\infty$, nonlinear functions. Analogous to linear systems, the \emph{relative degree} of the system \eqref{eq:nonlin_model} is the smallest integer $r$ for which 
$L_gL_f^{r-1}h(x)\neq 0$, for all $x$ in the neighborhood of the operating point $x_0$. 
By successive derivatives of the output $y$, it can be shown that $y^{(r)}=L_f^r h(x)+L_gL_f^{r-1}h(x)u$.
Hence, the relative degree again represents the lowest-order derivative of the output that explicitly depends on the input. For example, the nonlinear dynamics $\ddot{\theta}=-\cos(\theta)+u$, $\theta\in (-\pi,\pi)$, with output $\theta$ and input $u$, have relative degree $2$ for all $\theta$ in the operating range. 

We next review the left inverse of the dynamics \eqref{eq:nonlin_model}, which is used in the literature to reconstruct input $u$ from the output $y$, see \cite{chnon}, and which we utilize in our discussion in Section~\ref{sec:basic}. Note that the inverse dynamics of linear systems can be easily derived from the transfer function, and its stability is determined by the zeros of the original transfer function (the roots of the polynomial $N(s)$). Suppose that~\eqref{eq:nonlin_model} has a well-defined relative degree $r$ in the operating range. Recall that $y^{(r)}=L_f^r h(x)+L_gL_f^{r-1}h(x)u$, where by definition $L_gL_f^{r-1}h(x) \neq 0$. By reordering this equation and from \eqref{eq:nonlin_model}, we obtain the inverse dynamics
 \begin{equation}
\label{eq:nonlin_inv}
\begin{split}
\dot{x}&= \left(f(x)-g(x)\frac{L_f^rh(x)}{L_gL_f^{r-1}h(x)}\right) + \frac{g(x)}{L_gL_f^{r-1}h(x)}y^{(r)}\\
u&=-\frac{L_f^rh(x)}{L_gL_f^{r-1}h(x)}+ \frac{1}{L_gL_f^{r-1}h(x)}y^{(r)},
\end{split}
\end{equation}
with input $y^{(r)}$ and output $u$. A necessary condition for the stability of~\eqref{eq:nonlin_inv} is that the dynamics of~\eqref{eq:nonlin_inv} when $y(t)=0$ uniformly (consequently, $y^{(r)}(t)=0$) are stable in the Lyapunov sense; this is called the zero dynamics of the system~\eqref{eq:nonlin_model}.  
While it appears from \eqref{eq:nonlin_inv} that the inverse dynamics have $n$ states, this is not the minimum realization of the inverse dynamics. Instead, for dynamic systems \eqref{eq:nonlin_model} with well-defined relative degree, one can always find a nonlinear coordinate transformation to convert \eqref{eq:nonlin_model} into a special form, called the Byrnes-Isidori normal form. Using this form, a minimum realization of \eqref{eq:nonlin_inv} can be derived with $(n-r)$ states, inputs $y,\dot{y},\cdots,y^{(r)}$, and output $u$, refer to \cite{Isidori,chnon}. 

\section{Problem Statement}
\label{sec:problem}
In this paper, we study transfer learning between two robotic systems from a dynamical system perspective. We use our theoretical results to provide insights into the properties of optimal transfer maps. These insights facilitate the identification of this optimal map from data using, for instance, system identification algorithms. 
In particular, as shown in Figure \ref{fig:problem}, we consider a source SISO dynamical system $\cD_S$ with input reference signal $d$ and output $y_s$, representing the source robot, and a target SISO dynamical system $\cD_T$ with the same input $d$ and output $y_t$, representing the target robot. Assuming that $d$ is an arbitrary bounded signal, the \emph{transfer learning problem} is to find a transfer map $\cD_{TL}$ with input $y_s$ and output $y_{TL}$ such that \emph{the error~$e$ between $y_t$ and $y_{TL}$ is minimized.}   

To make the transfer learning problem tractable, we assume that both the source system $\cD_S$ and the target system $\cD_T$ are input-output stable (this is typically characterized by the BIBO stability notion for LTI systems and by the Input to Output Stability (IOS) notion for nonlinear systems~\cite{Sontag}). This is a reasonable assumption, given that input-output stability is necessary for the safe operation of the robot and transfer learning is only efficient for stable systems~\cite{Kaizad}.   
\begin{figure}[t]
\begin{center}
\includegraphics[scale=.47, trim = 20mm 10mm 10mm 0mm]{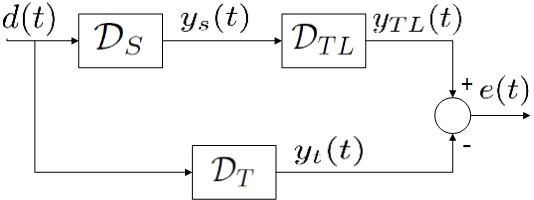}
\end{center}
\caption{Illustrative figure of the transfer learning problem: the objective is to identify a transfer learning map $\cD_{TL}$ to minimize the error between the transferred output $y_{TL}$ and the target system's output $y_t$.}
\label{fig:problem}
\end{figure}

\section{Main Results}
\label{sec:basic}
In this section, we assume that the source system dynamics $\cD_S$ and the target system dynamics $\cD_T$ are known, and then provide a theoretical study on when it is possible to identify a dynamic map $\cD_{TL}$ that achieves perfect transfer learning from $\cD_S$ to $\cD_T$, i.e., it perfectly aligns $y_t$ and $y_{TL}$ resulting in zero transfer learning error ($e(t)=0$). From this theoretical study, we provide insights into the correct properties of the dynamic map $\cD_{TL}$, including its order, relative degree, and input-output variables. We then show that these properties can be determined from basic properties of the source and target systems, which can be identified through short, simple experiments. There is no need to know the source/target system dynamics a priori. 
Knowing the properties of the optimal transfer map greatly facilitates the identification of this map from data using standard system identification tools, as we will show in Section \ref{sec:examples}.   

 For simplicity, we first present our theoretical study and insights for linear systems, and then show that these insights remain valid for nonlinear systems. To that end, in this paper, we say that an LTI system is \emph{minimum-phase} if the dynamics of the system and its inverse dynamics are BIBO stable. 
\begin{theorem}
\label{thm_rel_lin}
Consider two continuous-time, BIBO stable, SISO, LTI systems, with rational transfer functions $G_S(s)$ and $G_T(s)$, and suppose that $G_S(s)$ is minimum-phase. Then, there exists a causal, BIBO stable map from the source system $G_S(s)$ to the target system $G_T(s)$ that achieves perfect transfer learning if and only if the relative degree of $G_S(s)$ $\leq$ the relative degree of $G_T(s)$. 
\end{theorem}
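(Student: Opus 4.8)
The plan is to work entirely in the transfer-function (Laplace) domain and to show that requiring perfect transfer learning pins down the map uniquely, after which the two required properties (causality and BIBO stability) reduce to simple conditions on relative degrees and pole locations. First I would write $Y_s(s)=G_S(s)D(s)$ and $Y_t(s)=G_T(s)D(s)$, and model the transfer map as an LTI system with transfer function $G_{TL}(s)$ acting on $y_s$, so that $Y_{TL}(s)=G_{TL}(s)G_S(s)D(s)$. Perfect transfer learning means $y_{TL}=y_t$ for every bounded reference $d$; since $d$ is arbitrary, this forces the identity $G_{TL}(s)G_S(s)=G_T(s)$ between rational functions, and because $G_S$ is minimum-phase (hence nonzero and invertible), the map is uniquely determined as $G_{TL}(s)=G_T(s)/G_S(s)$.

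Next I would read off the structural properties of this candidate. Writing $G_S=N_S/D_S$ and $G_T=N_T/D_T$, we have $G_{TL}=(N_T D_S)/(D_T N_S)$, whose relative degree is $(\deg D_T+\deg N_S)-(\deg N_T+\deg D_S)=r_T-r_S$, where $r_S,r_T$ denote the relative degrees of $G_S,G_T$; note that any pole-zero cancellation lowers numerator and denominator degree equally and so leaves this difference unchanged. The poles of $G_{TL}$ are among the poles of $G_T$ and the zeros of $G_S$; by BIBO stability of $G_T$ the former lie in the OLHP, and by the minimum-phase assumption on $G_S$ the latter lie in the OLHP as well. Hence, under the hypotheses, $G_{TL}$ is automatically BIBO stable, and the only remaining question is causality, i.e., properness.

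For the ``if'' direction, assume $r_S\le r_T$. Then the relative degree $r_T-r_S$ of $G_{TL}$ is nonnegative, so $G_{TL}$ is proper and therefore realizable as a causal LTI system; combined with the BIBO stability just established, $G_{TL}=G_T/G_S$ is a causal, BIBO stable map achieving $e(t)\equiv 0$. For the ``only if'' direction, suppose a causal, BIBO stable map achieving perfect transfer learning exists. By the uniqueness argument above it must coincide with $G_T/G_S$, whose relative degree is $r_T-r_S$; since the transfer function of any causal, BIBO stable LTI system is proper, we need $r_T-r_S\ge 0$, that is $r_S\le r_T$.

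The part requiring the most care is the ``only if'' direction, specifically justifying why no causal map can realize perfect transfer learning other than $G_T/G_S$ and why an improper transfer function is excluded. I would make the uniqueness rigorous by using that the minimum-phase, invertible $G_S$ makes the signal class $\{\, y_s=G_S d : d \text{ bounded} \,\}$ rich enough to force the input-output relation of any exact map to agree with $G_T G_S^{-1}$; and I would argue that an improper rational transfer function corresponds in the time domain to (possibly repeated) differentiation, which is neither causal nor BIBO stable, so the properness requirement $r_S\le r_T$ is genuinely necessary. I would finally remark that the minimum-phase hypothesis on $G_S$ is exactly what prevents the inverse $1/G_S$ from contributing OLHP-violating poles, so that stability of the map comes ``for free'' once the relative-degree condition holds.
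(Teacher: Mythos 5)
Your proposal is correct and follows essentially the same route as the paper's proof: both derive $G_{TL}(s)G_S(s)=G_T(s)$ from the arbitrariness of the reference, count relative degrees (noting that pole--zero cancellations do not affect the count) to get necessity of $r_S\le r_T$ from properness/causality, and for sufficiency construct $G_{TL}=\frac{D_S N_T}{N_S D_T}$, whose poles lie among the roots of $D_T$ and $N_S$ and are hence in the OLHP by BIBO stability of $G_T$ and the minimum-phase assumption on $G_S$. The only cosmetic difference is that you phrase the necessity direction via uniqueness of the exact map, whereas the paper reads the degree identity off the equation $G_\alpha G_S=G_T$ directly; the content is the same.
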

\begin{proof}
($\Rightarrow$) Let $G_S(s):=\frac{N_S(s)}{D_S(s)}$ and $G_T(s):=\frac{N_T(s)}{D_T(s)}$. By assumption, there exists a causal function $G_\alpha(s)$ such that for any bounded input $u$, $(G_\alpha(s)G_S(s)-G_T(s))U(s)=0$. Since $u$ is arbitrary, then clearly $G_\alpha(s)G_S(s)-G_T(s)=0$. Equivalently, $G_\alpha(s)G_S(s) = G_T(s)$. Let $G_\alpha(s):=\frac{N_\alpha(s)}{D_\alpha(s)}$. Then, we have
$\frac{N_\alpha(s)}{D_\alpha(s)} \frac{N_S(s)}{D_S(s)} = \frac{N_T(s)}{D_T(s)}$.
Even in the presence of pole-zero cancellations, it can be shown that the above equation implies  
\begin{equation*}
\begin{split}
(deg(D_\alpha(s))+deg(D_S(s)))-(deg(N_\alpha(s))+deg(N_S(s)))\\ =(deg(D_T(s))-deg(N_T(s))).
\end{split}
\end{equation*}
By reordering the terms on the left hand side (LHS), the summation of the relative degrees of $G_\alpha(s)$ and $G_S(s)$ is equal to the relative degree of $G_T(s)$. Since $G_\alpha(s)$ is causal, the relative degree of $G_\alpha(s)\geq0$, and the result follows.

($\Leftarrow$) Suppose that the relative degree of $G_S(s)$ $\leq$ the relative degree of $G_T(s)$. We construct a causal, stable map $G_\alpha(s)$ that achieves perfect transfer learning. Let
\begin{equation}
\label{eq:opt_TL}
G_\alpha(s):=\frac{D_S(s)}{N_S(s)} \frac{N_T(s)}{D_T(s)}.
\end{equation}
Since the relative degree of $G_S(s)$ $\leq$ the relative degree of $G_T(s)$, we have 
$deg(D_S(s))-deg(N_S(s))\leq deg(D_T(s))-deg(N_T(s))$. Equivalently, 
$deg(D_S(s))+deg(N_T(s))\leq deg(D_T(s))+deg(N_S(s))$.
This implies $G_\alpha(s)$ is a causal function. Notice that the poles of $G_\alpha(s)$ are a subset of the roots of $D_T(s)$ and $N_S(s)$. Then, since 
$G_T(s)$ is BIBO stable and $G_S(s)$ is minimum-phase by assumption, 
the roots of $D_T(s)$ and $N_S(s)$ are all in the OLHP, and $G_\alpha(s)$ is a BIBO stable transfer function. Next, one can verify that for the selected $G_\alpha(s)$, we have $G_\alpha(s)G_S(s)-G_T(s)=0$, and consequently $G_\alpha(s)$ achieves perfect transfer learning. 
\end{proof}  

Equation \eqref{eq:opt_TL} and its associated discussion are similar to standard methods in linear control synthesis. Similar results can be derived for discrete-time LTI systems. 
\begin{theorem}
\label{thm_rel_lin_dis}
Consider two discrete-time, BIBO stable, SISO, LTI systems, with rational transfer functions $G_S(z)$, $G_T(z)$, and suppose that $G_S(z)$ is minimum-phase. Then, there exists a causal, BIBO stable map from the source system $G_S(z)$ to the target system $G_T(z)$ that achieves perfect transfer learning if and only if the relative degree of $G_S(z)$ $\leq$ the relative degree of $G_T(z)$.
\end{theorem}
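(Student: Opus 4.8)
The plan is to mirror the proof of Theorem~\ref{thm_rel_lin} almost verbatim, since the entire argument is algebraic degree-counting together with a single stability-region check; the only genuine change is that the relevant stability region for discrete-time systems is the open unit disk (OUD) rather than the open left half plane. I would begin with the necessity direction ($\Rightarrow$). Writing $G_S(z)=:\frac{N_S(z)}{D_S(z)}$ and $G_T(z)=:\frac{N_T(z)}{D_T(z)}$, and letting $G_\alpha(z)$ denote a causal, BIBO stable transfer map achieving perfect transfer learning, the requirement $(G_\alpha(z)G_S(z)-G_T(z))U(z)=0$ for arbitrary bounded $u$ forces $G_\alpha(z)G_S(z)=G_T(z)$ exactly as in the continuous-time case. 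Counting the degrees of the numerator and denominator polynomials on both sides---valid even in the presence of pole-zero cancellations---then yields that the relative degree of $G_\alpha(z)$ plus the relative degree of $G_S(z)$ equals the relative degree of $G_T(z)$. Because causality of $G_\alpha(z)$ means its relative degree is nonnegative, the relative degree of $G_S(z)$ is at most that of $G_T(z)$.

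For sufficiency ($\Leftarrow$), I would construct the same candidate map as in \eqref{eq:opt_TL}, namely $G_\alpha(z):=\frac{D_S(z)}{N_S(z)}\frac{N_T(z)}{D_T(z)}$, and verify three things. First, the relative-degree hypothesis is equivalent to $deg(D_S(z))+deg(N_T(z))\leq deg(D_T(z))+deg(N_S(z))$, which says precisely that $G_\alpha(z)$ is proper, hence causal. Second, by inspection the poles of $G_\alpha(z)$ form a subset of the roots of $D_T(z)$ and $N_S(z)$. Finally, substituting into $G_\alpha(z)G_S(z)$ and cancelling shows it equals $G_T(z)$, so the constructed map achieves perfect transfer learning.

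The one step that needs care---though it is not deep---is the BIBO-stability argument in the sufficiency direction, since this is where the continuous-time and discrete-time cases diverge. In the discrete-time setting a rational transfer function is BIBO stable if and only if all of its poles lie strictly inside the unit circle, and I would correspondingly read \emph{minimum-phase} as requiring the zeros (roots of $N_S(z)$) to lie in the OUD. Then, since $G_T(z)$ is BIBO stable the roots of $D_T(z)$ lie in the OUD, and since $G_S(z)$ is minimum-phase the roots of $N_S(z)$ lie in the OUD; as the poles of $G_\alpha(z)$ are drawn only from these two sets, $G_\alpha(z)$ is BIBO stable. I expect no real obstacle beyond bookkeeping: the degree-counting is identical to the continuous-time case, and the only conceptual substitution is OLHP $\to$ OUD, together with the corresponding discrete-time readings of causality (relative degree $\geq 0$) and minimum-phase.
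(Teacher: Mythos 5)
Your proposal is correct and follows exactly the route the paper intends: the paper gives no separate proof of Theorem~\ref{thm_rel_lin_dis}, stating only that it follows analogously to Theorem~\ref{thm_rel_lin}, and your argument is precisely that analogue. You correctly identify the only substantive change---replacing the OLHP by the open unit disk as the stability region for poles (and for the zeros of $N_S(z)$ under the minimum-phase assumption)---while the degree-counting, causality, and cancellation steps carry over verbatim.
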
  

Theorems \ref{thm_rel_lin} and \ref{thm_rel_lin_dis} provide the following insights into the properties of the optimal transfer maps between systems. 

{\bf Insight 1:} From \eqref{eq:opt_TL}, one can see that the optimal transfer map is, in general, a dynamic system. Therefore, limiting the transfer map to be static \cite{Kaizad,Bocsi13} may be restrictive; see also Section \ref{sec:examples}. 

{\bf Insight 2:} To be able to identify the optimal transfer learning map from data using system identification algorithms, it is important to decide on the right order of the dynamic map. From~\eqref{eq:opt_TL}, the order of the optimal map that achieves zero transfer learning error is in general $deg(N_S(s))+deg(D_T(s))$. Equivalently, the correct order of the map is $n_s-r_s+n_t$, where $n_s$ is the order of the source system, $n_t$ is the order of the target system, and $r_s$ is the relative degree of the source system, which can be identified experimentally from the system step response as stated in Section \ref{sec:background}.  

{\bf Insight 3:} From \eqref{eq:opt_TL}, the relative degree of the optimal transfer learning map is $r_t-r_s$, where $r_s,~r_t$ are the relative degrees of the source and target systems, respectively. The relative degree of the transfer map is also needed for standard system identification algorithms. By knowing the order and the relative degree of the transfer learning map, the regressors of the map are determined. For instance, for a discrete-time transfer learning map of order~$3$ and relative degree~$1$, the map relates the output $y_{TL}(k)$ to the inputs $y_{TL}(k-1),~y_{TL}(k-2),~y_{TL}(k-3),y_s(k-1),y_s(k-2),y_s(k-3)$.   

{\bf Insight 4:} From Theorems \ref{thm_rel_lin} and \ref{thm_rel_lin_dis}, if the relative degree of the source system $r_s$ is greater than the relative degree of the target system $r_t$, then we cannot find a causal map satisfying perfect transfer learning (zero transfer learning error). Nevertheless, since we have the complete input-output data of the source robot available before carrying out the transfer learning, the causality requirement can be relaxed. For instance, although a discrete-time transfer learning map from $\{y_s(k-1),y_s(k),y_s(k+1),y_{TL}(k-1)\}$ to $y_{TL}(k)$ is non-causal, it can be implemented since all the future values of $y_s$ are saved before using the transfer learning map for transferring the source data to the target system. 
However, system identification computer tools such as MATLAB's identification toolbox are typically used for identifying causal models such as causal transfer functions (MATLAB: \texttt{tfest}), nonlinear autoregressive exogenous (NARX) models (MATLAB: \texttt{nlarx}), and recurrent neural networks, among others. One possible trick to solve this problem is to tailor the input of the dynamic transfer learning map as follows. 
First, for continuous-time systems, we know from \eqref{eq:opt_TL} that the optimal transfer map is $\frac{Y_{TL}(s)}{Y_s(s)}=\frac{N_\alpha(s)}{D_\alpha(s)}$, where $deg(D_\alpha)=n_s+n_t-r_s$, $deg(N_\alpha)=n_s+n_t-r_t$, and for this case $deg(N_\alpha)>deg(D_\alpha)$ (non-causal map). Instead of identifying this non-causal map, we use standard system identification computer tools to identify the causal map $\frac{Y_{TL}(s)}{s^{(r_s-r_t)}Y_s(s)}=\frac{N_\alpha(s)}{s^{(r_s-r_t)}D_\alpha(s)}$, which represents the Laplace transform of the map from $y_s^{(r_s-r_t)}(t)$, the $(r_s-r_t)$-th derivative of the source robot's output $y_s(t)$, to $y_{TL}(t)$. Hence, from the $y_s(t)$ response, we calculate $y_s^{(r_s-r_t)}(t)$ (and possibly low-pass filter $y_s(t)$ to avoid noise amplification). We then use $y_s^{(r_s-r_t)}(t)$ as the input to the dynamic transfer map to be identified.
Notice that this is not the only choice. One can, for example, use the system identification tools to identify the causal map $\frac{Y_{TL}(s)}{P(s)Y_s(s)}=\frac{N_\alpha(s)}{P(s)D_\alpha(s)}$, where $P(s)$ is a known $(r_s-r_t)$-th order polynomial in $s$, with all its roots in the OLHP. Since both $y_s$ and the polynomial $P(s)$ are known, one can define the data column for the tailored input of the dynamic map to be identified. For instance, if for $r_s-r_t=1$, one selects the polynomial $P(s)=s+1$, then the tailored input to the dynamic map, to be identified, is $y_s(t)+\dot{y}_s(t)$, and so on. 
Similarly, for discrete-time systems, we use system identification tools to identify the causal map $\frac{Y_{TL}(z)}{z^{(r_s-r_t)}Y_s(z)}=\frac{N_\alpha(z)}{z^{(r_s-r_t)}D_\alpha(z)}$. For this tailored, causal map, the input is $y_{s,mod}$, which is obtained by shifting each element in the data column for $y_s$ forward in time by $(r_s-r_t)$ samples, and the output is $y_{TL}$.   
%

We now show that these insights remain valid for {\it nonlinear systems}. Theorems  \ref{thm_rel_lin}, \ref{thm_rel_lin_dis} and the related insights mainly depend on the definition of relative degree, which is also defined for control affine nonlinear systems as discussed in Section \ref{sec:background}. Hence, suppose that we have two smooth, control affine nonlinear systems of the form \eqref{eq:nonlin_model}: a source system with order $n_s$ and a well-defined relative degree $r_s$ in the operating range, and a target system with order $n_t$ and a well-defined relative degree $r_t$ in the operating range. Also, suppose that the source system dynamics and its inverse dynamics are both input-output stable, and that the target dynamics are input-output stable \cite{Sontag}.  From \eqref{eq:opt_TL}, one can see that the optimal transfer map, that achieves zero transfer learning error, is composed of two cascaded systems: the inverse of the source system dynamics and the target system dynamics. Intuitively, the inverse of the source dynamics is utilized to successfully reconstruct the input $d$ from the source output response $y_s$, and then the target system dynamics are applied to exactly obtain the target output response $y_t$ from $d$. A similar approach can be utilized for nonlinear systems to get zero transfer learning error. From the last paragraph in Section \ref{sec:background}, we know that the minimum realization of the inverse dynamics of the source system has order $n_s-r_s$, while the order of the target system dynamics is by definition $n_t$. Therefore, the correct order of the optimal dynamic map, composed of these two cascaded systems, is $n_s+n_t-r_s$, which is the same conclusion we reached in Insight 2 for linear systems. Then, for this optimal map, we have from the relative degree definition for the source system with internal state $x$ (subscript $S$ is dropped from $f$, $g$, $h$ for notational simplicity)
$y_s^{(r_s)}=L_f^{r_s} h(x)+L_gL_f^{r_s-1}h(x)d$, where $L_gL_f^{r_s-1}h(x)\neq 0$, and for the target dynamics with internal state $v$ (subscript $T$ is dropped from $f$, $g$, $h$) 
$y_{TL}^{(r_t)}=L_f^{r_t} h(v)+L_gL_f^{r_t-1}h(v)d$, where $L_gL_f^{r_t-1}h(v)\neq 0$.
By getting $d$ from the first equation and substituting it in the second one, we have 
$y_{TL}^{(r_t)}=L_f^{r_t} h(v)-L_gL_f^{r_t-1}h(v) \frac{L_f^{r_s}h(x)}{L_gL_f^{r_s-1}h(x)}+ \frac{L_gL_f^{r_t-1}h(v)}{L_gL_f^{r_s-1}h(x)}y_s^{(r_s)}$, 
i.e., $y_{TL}^{(r_t)}$ explicitly depends on $y_s^{(r_s)}$, and consequently, it is reasonable to select the relative degree of the optimal map from $y_s$ to $y_{TL}$ to be $r_t-r_s$ as in Insight 3. 

\begin{algorithm}[t]
\caption{Finding the properties of transfer learning maps}
\label{alg_paper1}
~~\\
{\bf Given:} (1) Two SISO robotic systems: a source system with order $n_s$ and well-defined relative degree $r_s$, and a target system with order $n_t$ and well-defined relative degree $r_t$; (2) output responses of the source and target systems, $y_s$ and $y_t$, respectively, for a bounded reference input $d$.\\
{\bf Objective:} Find the properties of the optimal, dynamic transfer learning map, particularly its order, relative degree, input and output training data. \\  
{\bf Steps:}
\begin{enumerate}
\item If $r_s \leq r_t$, proceed. Otherwise, jump to step 3. 
\item Input data to the dynamic map: $y_s$; output data: $y_t$; order of the dynamic map is $n_s+n_t-r_s$; relative degree of the dynamic map is $r_t-r_s$. Stop.
\item Define tailored input data to the dynamic map $y_{s,mod}$. For continuous-time transfer learning maps, $y_{s,mod}=y_s^{(r_s-r_t)}$, the $(r_s-r_t)$-th derivative of the saved time response $y_s$. For discrete-time maps, the $y_{s,mod}$ data column is obtained from the $y_s$ column by shifting each element of $y_s$ forward in time by $r_s-r_t$ samples.
\item Input data: $y_{s,mod}$; output data: $y_t$; order of the dynamic map is $n_s+n_t-r_t$; relative degree of the dynamic map is zero.  
\end{enumerate} 
\end{algorithm}
\section{Algorithm}
\label{sec:alg}
Inspired by the insights presented in the previous section, we provide an algorithm for getting the correct properties of the optimal, dynamic transfer learning map between two robotic systems from simple experiments.
As discussed before, we assume that both systems are input-output stable and that the source system has stable inverse dynamics. 
Once the properties of the map are determined, one can utilize any system identification tool, such as MATLAB's identification toolbox, to identify the map from collected data as we will show in our practical examples in Section \ref{sec:examples}. The identified map can then be used to transfer any subsequent learning data from the source system to the target system. The main steps are summarized in Algorithm~\ref{alg_paper1}. Notice that step 2 of the algorithm directly follows from Insights 2, 3 in Section \ref{sec:basic}, while steps 3, 4 directly follow from Insight 4.  

To better understand the algorithm, suppose, as a toy numerical example, that we have two minimum-phase, discrete-time systems with zero initial conditions and orders $n_s=5$ and $n_t=3$.
We assume that this is the only available information about the systems. To identify the relative degrees of the systems, we apply at time step $k=0$ a step input to both systems. From the step response of the source system, we found that the output only changes at $k=4$, and consequently, $r_s=4$. Similarly, we found that $r_t=3$. We then follow the steps of our algorithm: (1) since $r_s > r_t$, we jump to step 3; (3) we construct the $y_{s,mod}$ data column by shifting each element in the step response $y_s$ forward in time by $r_s-r_t=1$ sample; (4) the input training data is $y_{s,mod}$, the output training data is $y_t$, the map order is $5$, and its relative degree is $0$. The transfer learning map should relate $y_{TL}(k)$ to $y_{TL}(k-1),\cdots,y_{TL}(k-5)$$,y_{s,mod}(k),\cdots,y_{s,mod}(k-5)$ to best fit the output data $y_t$. 

One advantage of the proposed algorithm is that it does not require precise knowledge of the robots' dynamics and/or parameters. Instead, it only requires the knowledge of basic properties of the robotic systems, namely the system order and the relative degree. The order of the robotic system can be determined from approximate physics models, or even from general information about the robot structure.  
For instance, an $N$-link manipulator has a dynamical model of order $2N$. Similarly, the relative degree of the system may be determined from physics models, or experimentally from the step response of the system as discussed in Section \ref{sec:background}. Another advantage of the proposed algorithm is that it is generic in the sense that it can be combined with any system identification model/algorithm. For instance, one can utilize the proposed algorithm to determine the correct properties of both linear and nonlinear dynamic transfer maps.

\section{Application}
\label{sec:examples}
In this section, we utilize the proposed algorithm to identify a dynamic transfer learning map between two different quadrotor platforms, namely the Parrot AR.Drone 2.0 and the Parrot Bebop 2.0 (see Figure \ref{fig_drones}), and then verify through experimental results the effectiveness of our proposed map. 

\begin{figure}[t]
\begin{center}
\includegraphics[scale=.33, trim = 0mm 20mm 0mm 17mm]{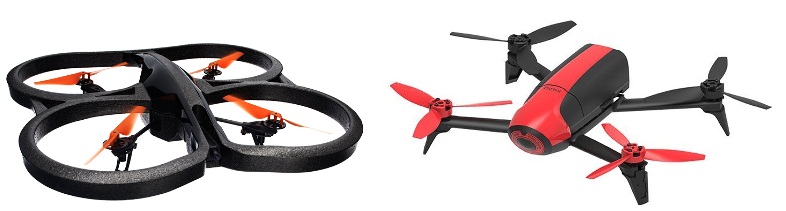} 
\end{center}
\caption{The two quadrotor platforms used in our experiments; we learn a transfer learning map from the Parrot AR.Drone 2.0 (left) to the Parrot Bebop 2.0 (right).}
\label{fig_drones}
\end{figure}  

Quadrotor vehicles have six degrees of freedom: the translational position of the vehicle's center of mass $(x,y,z)$, measured in an inertial coordinate frame, and the vehicle's attitude, represented by the Euler angles $(\phi,\theta,\psi)$, namely the roll, pitch, and yaw angles, respectively. The full state of the vehicle also includes the translational velocities $(\dot{x},\dot{y},\dot{z})$ and the rotational velocities $(p,q,r)$, resulting in a dynamic model of the vehicle with $12$ states. Detailed description of the quadrotor's dynamic model can be found in \cite{angela_quad}. In our experiments, the quadrotor's states are all measured by the overhead motion capture system, which consists of ten $4$-mega pixel cameras running at $200$\,Hz. 

In our study, the two quadrotor platforms utilize a control strategy that consists of two controllers: \emph{(i)} an on-board controller that runs at $200$\,Hz, receives the desired roll $\phi_d$, pitch $\theta_d$, yaw velocity $r_d$
and the $z$-axis velocity $\dot{z}_d$, and outputs the thrusts of the quadrotor's four motors, and \emph{(ii)} an off-board controller that is implemented using the open-source Robot Operating System (ROS), runs at $70$\,Hz, receives the desired vehicle's position, and outputs the commands $(\phi_d,\theta_d,r_d,\dot{z}_d)$ to the on-board controller. 
For the off-board controller, we utilize a nonlinear control strategy to stabilize the $z$-position of the vehicle to a fixed value and the yaw angle to zero, and then manipulate $\phi_d$ and $\theta_d$ to control the vehicle's motion in the $x$-, $y$-directions. In particular, we select $\phi_d$ and $\theta_d$ to implement a nonlinear transformation that decouples the dynamics in the $x$-, $y$-directions into approximate, linearized, second-order dynamics in each direction, and then utilize a proportional-derivative (PD) controller for each direction. More details can be found in \cite{HS16}. 

\begin{figure}[t]
\begin{center}
\includegraphics[scale=.25, trim = 10mm 20mm 10mm 20mm]{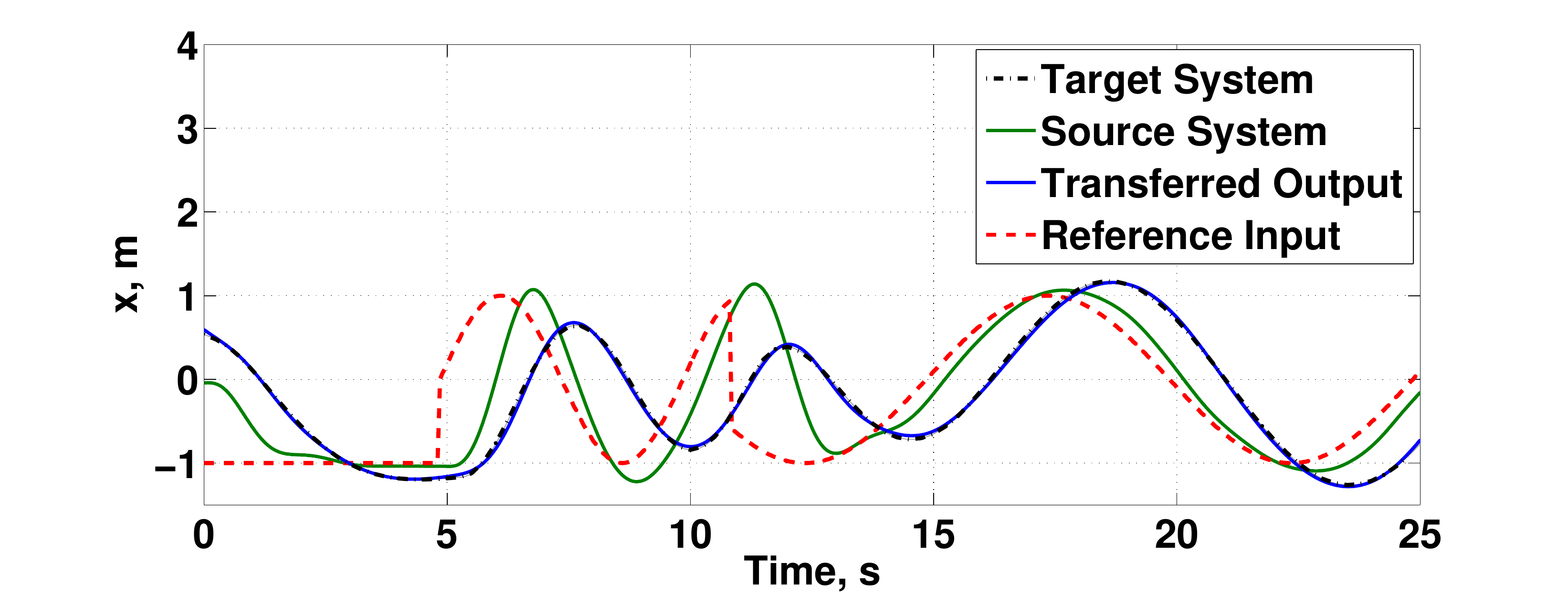} 
\end{center}
\caption{The training input-output data used to identify the transfer learning map in the $x$-direction, and the transferred output using the proposed map. The transferred output fits the target system's output with $95.79\%$.}
\vspace{-1.5em}
\label{fig:training_quadx}
\end{figure}

In this application, we identify a transfer learning map from the Parrot AR.Drone 2.0 platform, the source system, to the Parrot Bebop 2.0 platform, the target system, for each of the $x$-, $y$-directions. We first stabilize the vehicle's $y$- and $z$-positions to constant values, and study the motion in the $x$-direction. For this case, the input to each system is the desired $x$-value reference, while the output is the actual $x$-value of the quadrotor. We start by collecting data for both vehicles in the $x$-direction. In particular, we apply the same desired reference $x_d$ to both vehicles and detect their outputs (see Figure \ref{fig:training_quadx}). We then utilize this collected data to identify a continuous-time transfer learning map with the aid of our proposed algorithm. Following the previous paragraph, we know that under the applied control strategy the $x$-direction dynamics for the quadrotors have approximately order $2$, and by analyzing the dynamic equations, we have found that the relative degree for the quadrotors in this case is $1$. We have verified this value experimentally from the collected data in Figure \ref{fig:training_quadx} as discussed in Section \ref{sec:background}. To sum up, we have $n_s=n_t=2$ and $r_s=r_t=1$. By following the steps of our algorithm, the correct input to the transfer learning map is the $x$-output of the source system, $y_s$, its output is the $x$-output of the target system, $y_t$, its dynamic order is $3$, and its relative degree is $0$. Since the applied control strategy turns the closed loop into an approximately linear behavior in the $x$-direction as discussed in the previous paragraph, we identify a linear transfer learning map with the desired properties using MATLAB's \texttt{tfest} for identifying transfer functions. The obtained transfer function fits the training data ($y_t$) with $95.79\%$, measured based on the well-known normalized root mean square error (NRMSE) fitness value ($fit=100(1-NRMSE)\%$), see Figure \ref{fig:training_quadx}. For comparison, we have also identified an optimal, static gain from $y_s$ to $y_t$ using MATLAB's \texttt{tfest} with the function's orders set to $(0,0)$; the gain is 0.6925, and it fits the data with $27.28\%$. 

\begin{figure}[t]
\begin{center}
\includegraphics[scale=.24, trim = 10mm 20mm 10mm 20mm]{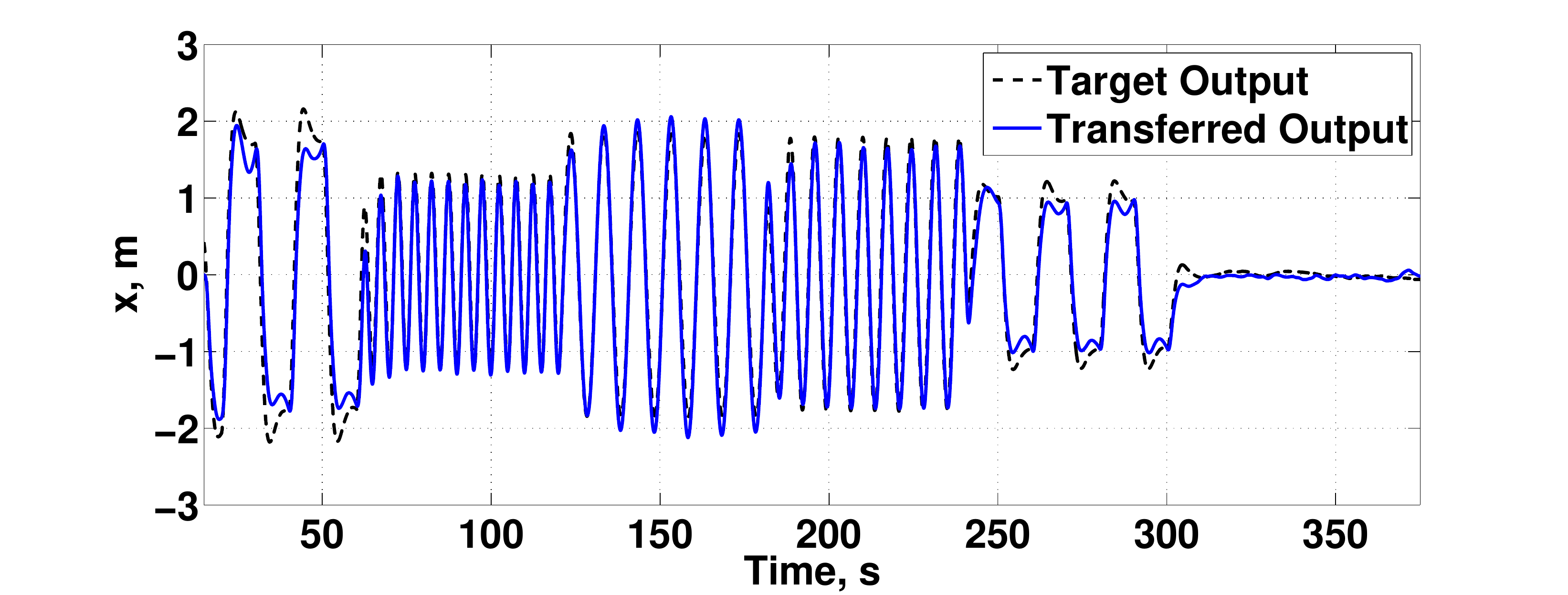}
\end{center}
\caption{The output of the target quadrotor and the transferred output using our proposed, dynamic map for transferring six minutes of collected data of the source quadrotor. The proposed map achieves an RMS error of $0.2142$~m, which reduces the direct transfer learning error by $70.65\%$.}
\label{fig:test_quadx}
\end{figure}

We next test the identified transfer learning maps for transferring six minutes of collected data from the Parrot AR.Drone 2.0 to the Parrot Bebop 2.0. Figure \ref{fig:test_quadx} shows the actual output of the target quadrotor and the transferred output using our proposed map.
The proposed map achieves an RMS error of $0.2142$\,m, compared to $0.7297$\,m for direct transfer learning (identity map), and $0.6946$\,m for the identified, optimal, static map. The proposed map achieves $70.65\%$ reduction in error over the direct transfer learning, while the optimal, static map achieves only $4.81\%$ reduction.  


\begin{figure}[t]
\begin{center}
\includegraphics[scale=.212, trim = 10mm 26mm 10mm 19.5mm]{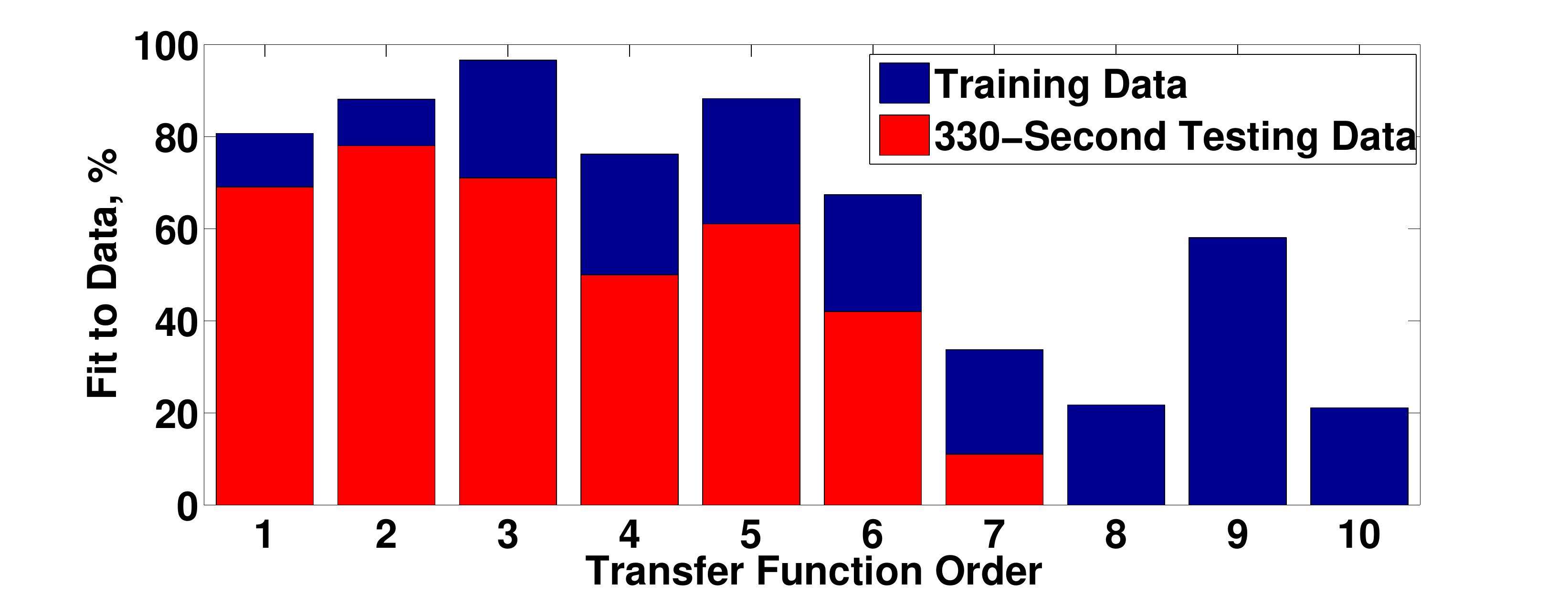} 
\end{center}
\caption{The figure shows how optimal transfer functions with different orders and zero relative degree fit the $y$-direction training and testing data. The NRMSE fitness measure is used. The dynamic order 3, proposed by our algorithm, best fits the training data, and it achieves the second highest fit to the $330$-second testing data.}
\vspace{-1.5em}
\label{fig:order_justification}
\end{figure}

We similarly identify a transfer learning map from the Parrot AR.Drone 2.0 to the Parrot Bebop 2.0 in the $y$-direction. We omit the details for brevity. 
The proposed, identified map is a transfer function with order $3$ and zero relative degree, and it fits the training data with $96.59\%$. The optimal, static TL gain is $0.518$, and it fits the data with~$21.73\%$. We then test the proposed, dynamic map for transferring $330$-second $y$-direction data from the Parrot AR.Drone 2.0 to the Parrot Bebop 2.0. 
For this testing data, our proposed map has an RMS error of $0.192$\,m, which achieves $68.29\%$ error reduction compared to direct transfer learning and $62\%$ error reduction compared to the optimal, static gain. Figure \ref{fig:order_justification} shows how optimal transfer functions with different orders and zero relative degree fit the training and testing data. The order 3, proposed by our algorithm, best fits the training data, and it achieves the second highest fit to the $330$-second testing data after the order $2$. However, using this testing data as training data for identifying new transfer functions again shows that the third-order transfer function outperforms the second-order one in fitting this testing data. 
While it is expected due to overfitting that higher-order transfer functions have lower fit on the testing data, this is less obvious for the training data. The explanation is likely that the orders in Figure \ref{fig:order_justification} are not high enough to overfit the $25$-second training data ($5000$ data points). Indeed, for order $50$, the obtained transfer function fits the training data with $98.14\%$, but it completely fails to transfer the testing data.     

\begin{table}[!t]
\small
\caption{TL error (RMS) for data from tracking a circle} 
\label{table_errors} 
\centering
\vspace*{-3.9mm}
\begin{tabular}{|c|c|c|c|}
\hline
&\textbf{Direct TL} &\textbf{Proposed Map}& \textbf{Reduction} \\ \hline
{\bf $x$-direction} & $0.87$\,m & $0.441$\,m & $49.26\%$ \\ \hline
{\bf $y$-direction} & $0.675$\,m & $0.057$\,m & $91.5\%$  \\ \hline
{\bf Total} & $1.101$\,m & $0.445$\,m & $59.58\%$  \\ \hline
\end{tabular}
\end{table}

We then test the proposed, identified TL maps in the $x$-, $y$-directions for transferring the $x$-, $y$-data from the Parrot AR.Drone 2.0 to the Parrot Bebop 2.0, for the case where both vehicles are required to track a unit circle in the $(x,y)$-plane (with frequency $0.14$\,Hz, which is different from the frequencies of the references used in the training data). Table \ref{table_errors} summarizes the transfer learning errors 
for both the proposed map and the direct transfer learning.
Our proposed, dynamic maps achieve significant reduction of the direct transfer learning errors. However, the total improvement is less than in the previous examples. This is likely due to the unmodeled coupling in the $x$-, $y$-directions. The optimal TL map for this case should be a $(2\times2)$ matrix of dynamic maps to account for the coupling between the two directions. 

\section{Conclusions}
\label{sec:con}
We have studied multi-robot transfer learning (TL) from a dynamical system perspective for SISO systems. While many existing methods utilize static TL maps, we have shown that the optimal TL map is a dynamic system and provided an algorithm for determining the properties of the dynamic map, including its order and regressors, from knowing the order and relative degree of the systems. These basic system properties can be obtained from approximate physics models of the robots or from simple experiments. 
Our results show that for the testing data, dynamic maps, with correct features from our proposed algorithm, achieve on average $66\%$ reduction of TL errors compared to direct TL, while optimal, static gains achieve only $15\%$ reduction.  
For future research, we consider the generalization of this algorithm to multi-input multi-output systems.  

\bibliographystyle{IEEEtranS}

\end{document}